  \providecommand\BibTeX{{%
    \normalfont B\kern-0.5em{\scshape i\kern-0.25em b}\kern-0.8em\TeX}}}
\newtheorem{asmp}{Assumption}
\newtheorem{lemma}{Lemma}
\newcommand{\delequal}{\stackrel{\Delta}{=}}
\begin{document}
\title{Answering Causal Queries at Layer 3 with DiscoSCMs: Embracing Heterogeneity}


\author{Heyang Gong}
\affiliation{%
  \institution{Kuaishou Inc.}
  \city{Beijing}
  \country{China}
}
\email{gongheyang03@kuaishou.com}

\renewcommand{\shortauthors}{Heyang Gong}

\begin{abstract} 
In the realm of causal inference, Potential Outcomes (PO) and Structural Causal Models (SCM) are recognized as the principal frameworks. However, when it comes to addressing Layer 3 valuations—counterfactual queries deeply entwined with individual-level semantics—both frameworks encounter limitations due to the degenerative issues brought forth by the consistency rule.  This paper advocates for the Distribution-consistency Structural Causal Models (DiscoSCM) framework as a pioneering approach to counterfactual inference, skillfully integrating the strengths of both PO and SCM. The DiscoSCM framework distinctively incorporates a unit selection variable $U$ and embraces the concept of uncontrollable exogenous noise realization. Through personalized incentive scenarios, we demonstrate the inadequacies of PO and SCM frameworks in representing the probability of a user being a complier (a Layer 3 event) without degeneration, an issue adeptly resolved by adopting the assumption of independent counterfactual noises within DiscoSCM. This innovative assumption broadens the foundational counterfactual theory, facilitating the extension of numerous theoretical results regarding the probability of causation to an individual granularity level and leading to a comprehensive set of theories on heterogeneous counterfactual bounds. Ultimately, our paper posits that if one acknowledges and wishes to leverage the ubiquitous heterogeneity, understanding causality as invariance across heterogeneous units, then DiscoSCM stands as a significant advancement in the methodology of counterfactual inference. 
\end{abstract}

\keywords{Causality, SCM, Potential Outcome, DiscoSCM, Counterfactual, Distribution-consistency, Layer Valuation, Heterogeneity, PNS}



\maketitle
\begin{CJK}{UTF8}{gbsn} 

\section{Introduction}

Causal inference, a cornerstone of empirical research, is underpinned by two principal frameworks: Potential Outcomes (PO) \citep{rubin1974estimating, neyman1923application} and Structural Causal Models (SCMs) \citep{pearl2009causality}. The PO framework, initiated with a focus on experimental units, associates each treatment value $T=t$ with a corresponding set of potential outcomes $Y(t)$, of which only the one aligned with the received treatment is observable. In contrast, the SCM framework, through structural equations, delves into the causal mechanisms behind observed phenomena, defining potential outcomes as variables within submodels crafted by interventions, thereby operationalized via the $do$-operator. Despite their distinct approaches, both frameworks are largely considered equivalent, allowing for a comprehensive translation of causal statements. These methodologies have found wide applications across disciplines, notably in the scenario of industrial personalizing decision-making with uplift modeling technique that leverages heterogeneous causal effects \cite{gutierrez2017causal, diemert2018large, goldenberg2020free, zhang2021unified, ai2022lbcf}.

Recently, counterfactuals at layer 3 rather than causal effects at layers 2 have been considered as increasingly important in such scenarios \cite{mueller2023personalized, li2019unit, li2022unit}. Considering a pertinent motivating example, the probability of a counterfactual event $P(Y(t)=y, Y(t')=y')$, where $T$ represents the decision to issue a discount coupon and $Y$ the action of making a purchase, has emerged as a focal point \cite{li2019}.  This underscores the critical role of counterfactuals, inherently linked to the joint distribution of counterfactual outcomes, in refining personalized decision-making processes.

The debate surrounding these methodologies, however, persists. Dawid \cite{geneletti2011defining, dawid2023personalised} critically challenges the foundational premises of counterfactual-based decision-making, questioning the real existence of potential responses and labeling such strategies as potentially misleading. In contrast, Gong \cite{gong2024discoscm} underscores the necessity of counterfactual methodologies but points out the limitations of existing frameworks in addressing Layer 3 valuation, particularly due to degeneration issues caused by the consistency rule  \cite{angrist1996identification}. Specifically, for a population of users with observed data ($\{ x_u, t_u, y_u\}_{u \in U}$), the joint distribution $P(y_t, y'_{t'})$ at the individual level or for a sub-population ($t_u=t, y_u=y$) must degenerate, since one component of the joint distribution degenerates to a constant that is consistent with the observed outcome value. To circumvent such degeneration challenges, this paper champions the Distribution-consistency Structural Causal Models (DiscoSCM) framework, where the consistency rule is replaced by a distribution-consistency rule, allowing it to accommodate non-degenerative counterfactuals that are heterogeneous across individuals. 

In this paper, we delve into counterfactual inference, such as determining the probability of being a complier in the aforementioned example. Our investigation on DiscoSCMs reveals the challenge of indeterminable counterfactuals, leading us to adopt the assumption of Independent Counterfactual Noises (ICN) for identification, expanding the foundational counterfactual theory. Furthermore, we found that even this ICN assumption does not hold,  we still can derives heterogeneous bounds across individuals for counterfactuals regarding the probability of causation.  We posit that DiscoSCM is exceptionally promising for counterfactual inference that embraces heterogeneity.

Our investigation into DiscoSCMs and the adoption of the Independent Counterfactual Noises (ICN) assumption for identification expands the foundational theory of counterfactuals. This enables us to establish bounds for heterogeneous counterfactuals, enhancing our understanding of causality at an individual granularity level. We posit that DiscoSCM is exceptionally promising for counterfactual inference that embraces heterogeneity.

Our contributions are manifold:
\begin{itemize}
\item Identifying the indeterminate nature of counterfactuals and proposing the ICN assumption as a solution.
\item Developing and validating novel theorems that clarify the bounds of counterfactuals  at individual-level, thereby enhancing our understanding of \textit{heterogeneous counterfactuals}.
\item Advocating for the DiscoSCM framework as a significant advancement in addressing the ubiquitous heterogeneity, and understanding causality as invariance across heterogeneous units.
\end{itemize}

\section{Formulation of DiscoSCMs}

The mainstream frameworks for causal modeling, including PO and SCM as detailed in Appendix \ref{sec:prelim}, fundamentally rely on the consistency rule. This reliance introduces limitations in the capacity for counterfactual inference \cite{gong2024discoscm}. Consider the following hypothetical scenario: "If an individual with average ability scores exceptionally high on a test due to good fortune, what score would they achieve if they retook the test under identical conditions? Would it be another exceptionally high score or an average one?"   Frameworks anchored in the consistency rule might predict another high score, yet intuitively, an average score seems more plausible, recognizing the non-replicability of good fortune.  To accommodate this ``uncontrollable good fortune'', the distribution-consistency assumption is proposed.
\begin{asmp}[\textbf{Distribution-consistency} \cite{gong2024discoscm}]
\label{assump:distri-consist}
For any individual represented by $U=u$ with an observed treatment $X = x$, the counterfactual outcome $Y(x)$ is equivalent in distribution to the observed outcome $Y$. Formally, 
\begin{equation}
\label{eq:assump:distri-consist}
X = x, U = u \Rightarrow Y(x) \stackrel{d}{=} Y,
\end{equation}
where \(\stackrel{d}{=}\) indicates equivalence in distribution.
\end{asmp}
Differing from Assumption \ref{assump:consist}, key modifications include the inclusion of $U=u$ and the use of $\stackrel{d}{=}$. In alignment with these modifications, the Distribution-consistency Structural Causal Model (DiscoSCM)  \cite{gong2024discoscm} is proposed, notable for incorporating a unit selection variable $U$ and acknowledging the insight of uncontrollable exogenous noise.
\begin{definition}
    \label{def:discoscm}
    A Structural Causal Model (DiscoSCM) is a tuple $\langle  U, \mathbf{E}, \mathbf{V}, \mathcal{F}\rangle$, where
    \begin{itemize}
        \item $U$ is a unit selection variable, where each instantiation $U=u$ denotes an individual. It is associated with a probability function $P(u)$, uniformly distributed by default.
        \item $\mathbf{E}$ is a set of exogenous variables, also called noise variables, determined by factors outside the model. It is independent to $U$ and associated with a probability function $P(\*e)$;
        \item $\mathbf{V}$ is a set of endogenous variables of interest $\{V_1, V_2, \ldots, V_n\}$, determined by other variables in $\mathbf{E} \cup \mathbf{V}$;
        \item $\mathcal{F}$ is a set of functions $\{f_1(\cdot, \cdot; u), f_2(\cdot, \cdot; u), \ldots, f_n(\cdot, \cdot; u)\}$, where each $f_i$ is a mapping from $E_{i} \cup Pa_{i}$ to $V_{i}$, with $E_{i} \subseteq \mathbf{E}$, $Pa_{i} \subseteq \mathbf{V} \setminus V_{i}$, for individual $U=u$. Each function assigns a value to $V_i$ based on a select set of variables in $\mathbf{E} \cup \mathbf{V}$. That is, for $i=1,\ldots,n$, each $f_i(\cdot, \cdot; u) \in \mathcal{F}$ is such that 
        $$v_i \leftarrow f_{i}(pa_{i}, e_i; u),$$ 
        i.e., it assigns a value to $V_i$ that depends on (the values of) a select set of variables in $\*E \cup \*V$ for each individual $U=u$. 
    \end{itemize}
\end{definition}

Intervention logic within the DiscoSCM framework is articulated as follows.
\begin{definition}
    For a DiscoSCM $\langle  U, \mathbf{E}, \mathbf{V}, \mathcal{F}\rangle$, $\*X$ is a set of variables in $\*V$ and $\*x$ represents a realization, the $do(\mathbf{x})$ operator modifies: 1) the set of structural equations $\mathcal{F}$ to 
        \begin{align*}
            \mathcal{F}_{\mathbf{x}} := \{f_i : V_i \notin \mathbf{X}\} \cup \{\mathbf{X} \leftarrow x\},
        \end{align*}
        and; 2) noise $\mathbf{E}$ to couterfactual noise $\mathbf{E}(\mathbf{x})$ maintaining the same distribution. \footnote{Note that $\mathbf{E}(\mathbf{x})$ is not a function of $\*x$, but rather a random variable indexed by $\*x$. Importantly, it shares the same distribution as $\mathbf{E}$.} The induced submodel $\langle U, \mathbf{E}(\*x), \mathbf{V}, \mathcal{F}_{\mathbf{x}}\rangle$ is called the \textit{interventional DiscoSCM}.
\end{definition}
Therefore, we define counterfactual outcomes in counterfactual worlds created by interventions.
\begin{definition}[\textbf{Counterfactual Outcome}]
    For a DiscoSCM $\langle  U, \mathbf{E}, \mathbf{V}, \mathcal{F}\rangle$, $\*X$ is a set of variables in $\*V$ and $\*x$ represents a realization. The counterfactual outcome $\*Y^d(\*x)$ (or denoted as $\*Y(\*x)$, $\*Y_{\*x}(\*e_{\*x})$ when no ambiguity concerns) is defined as the set of variables $\*Y \subseteq \*V$ in the submodel $\langle U, \mathbf{E}(\*x), \mathbf{V}, \mathcal{F}_{\mathbf{x}}\rangle$. In the special case that $\*X$ is an empty set, the corresponding submodel is denoted as $\langle U, \mathbf{E}^d, \mathbf{V}, \mathcal{F}\rangle$ and its counterfactual noise and outcome as $\*E^d$ and $\*Y^d$, respectively.
\end{definition}
Notice that the counterfactual outcome $Y_u^d(t)$ is a random variable that equals in distribution to $Y_u$, rather than a constant $y$, when observing $X_u = x, Y_u = y$ for an individual $u$. In other words, it can conceptually be seen as an extension both PO and SCM frameworks, achieved by replacing the traditional consistency rule with a distribution-consistency rule. Furthermore, the Layer valuations can defined for each individual $U=u$. 
\begin{definition}[\textbf{Layer Valuation with DiscoSCM} \cite{gong2024discoscm}]
\label{def:semantics}
A DiscoSCM $\langle U, \mathbf{E}, \mathbf{V}, \mathcal{F}\rangle$ induces a family of joint distributions over counterfactual outcomes $\*Y(\*x), \ldots, \*Z({\*w})$, for any $\*Y$, $\*Z, \dots, \*X$, $\*W \subseteq \*V$:
\begin{align}\label{eq:def:l3-semantics_new}
    P(\*{y}_{\*{x}},\dots,\*{z}_{\*{w}}; u) =
\sum_{\substack{\{\*e_{\*x}\, \cdots , \*e_{\*w}\;\mid\;\*{Y}^d({\*x})=\*{y},\cdots  \\ , \*{Z}^d({\*w})=\*z, U=u\}}}
    P(\*e_{\*x}, ..., \*e_{\*w}).
\end{align}
is referred to as Layer 3 valuation. In the specific case involving only one intervention \footnote{When \( \*X = \emptyset \), we simplify the notation \( \*Y^d(\*x) \) to \( \*Y^d \) and \( \*E_{\*x} \) to \( \*E^d \).
}, e.g., $do(\*x)$:
\begin{align}
    \label{eq:def:l2-semantics_new}
    P({\*y}_{\*x}; u) = 
    \sum_{\{\*e_{\*x} \;\mid\;  {\*Y}^d({\*x})={\*y}, U=u\}}
    P(\*e_{\*x}),
\end{align}
is referred to as Layer 2 valuation. The case when no intervention:
\begin{align}
    \label{eq:def:l1-semantics_new}
    P({\*y}; u) = 
    \sum_{\{\*e \;\mid\; {\*Y}={\*y}, U=u\}}
    P(\*e),
\end{align}
is referred to as Layer 1 valuation. Here, $\*y$ and $\*z$ represent the observed outcomes, $\*x$ and $\*w$ the observed treatments, $\*e$ the noise instantiation, $u$ the individual, and we denote $\*y_{\*x}$ and $\*z_{\*w}$ as the realization of their corresponding counterfactual outcomes, $\*e_{\*x}$, $\*e_{\*w}$ as the instantiation of their corresponding counterfactual noises.
\end{definition}

This framework introduces a novel lens -- individual/population -- to address causal questions, when climbing the Causal Hierarchy: associational, interventional, and counterfactual layers. Specifically, consider a DiscoSCM where \(e\) represents the observed trace or evidence (e.g., $X = x, Y = y$) \footnote{Please distinguish between the ``evidence'' (\( e\)) and the instantiation of ``exogenous noise '' (\(\*e\)).}, we have:

\begin{theorem}[\textbf{Individual-Level Valuations} \cite{gong2024discoscm}]
For any given individual \(u\),
\label{theo:individual}
    \begin{align*}
        P(y_x|e;u) = P(y_x;u) = P(y|x;u) 
    \end{align*}  
indicating that the (individual-level) probabilities of an outcome at Layer 1/2/3 are equal.
\end{theorem}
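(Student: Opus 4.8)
The plan is to unfold Definition~\ref{def:semantics} for each of the three quantities, reducing every Layer valuation to an integral against the exogenous-noise law $P(\mathbf{e})$, and then to close the argument with exactly two inputs: the interventional semantics of a DiscoSCM (the counterfactual noise $\mathbf{E}(x)$ of a single-intervention submodel has the same law as $\mathbf{E}$ and, given $U=u$, is probabilistically independent of the noise realizing the factual trace $e$) and distribution-consistency (Assumption~\ref{assump:distri-consist}). Since $\mathbf{E}$ — hence also $\mathbf{E}(x)$ and $\mathbf{E}^d$ — is independent of $U$, conditioning a noise weight on $U=u$ is vacuous and the sole effect of $U=u$ is to select the structural maps $f_i(\cdot,\cdot;u)$; in particular $P(y_x;u)$ from \eqref{eq:def:l2-semantics_new} equals $\int \mathbf{1}[Y_x(\mathbf{e};u)=y]\,dP(\mathbf{e})$, with $Y_x(\cdot;u)$ the map induced by $\mathcal{F}_x$.

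First I would establish $P(y_x\mid e;u)=P(y_x;u)$. Expanding the conditional Layer~3 valuation as a ratio, the numerator $P(y_x,e;u)$ is, by \eqref{eq:def:l3-semantics_new}, a sum of $P(\mathbf{e}_x,\mathbf{e}^d)$ over pairs $(\mathbf{e}_x,\mathbf{e}^d)$ with $Y^d(x)=y$ and with $\mathbf{e}^d$ realizing the observed trace $e$ through the empty-intervention submodel, all under $U=u$. Because, given $U=u$, the $do(x)$-submodel noise $\mathbf{E}(x)$ is independent of $\mathbf{E}^d$ and carries marginal law $P(\mathbf{e})$, this joint sum factors as $\bigl(\sum_{\{\mathbf{e}_x\,:\,Y^d(x)=y,\,U=u\}}P(\mathbf{e}_x)\bigr)\,P(e;u)$; dividing by $P(e;u)$ leaves precisely the Layer~2 sum \eqref{eq:def:l2-semantics_new}, that is, $P(y_x;u)$.

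Next I would establish $P(y_x;u)=P(y\mid x;u)$. Specializing the previous step to the trace $e=\{X=x\}$ yields $P(y_x\mid x;u)=P(y_x;u)$. Separately, Assumption~\ref{assump:distri-consist} says that, conditional on $X=x$ and $U=u$, the counterfactual outcome $Y(x)$ has the same law as the observed outcome $Y$; at the level of probabilities this reads $P(y_x\mid x;u)=P(y\mid x;u)$, the right side being the Layer~1 conditional $P(y,x;u)/P(x;u)$ extracted from \eqref{eq:def:l1-semantics_new}. Chaining the two displays gives $P(y_x;u)=P(y\mid x;u)$, so together with the first step all three quantities coincide.

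The step I expect to be the real obstacle is the factorization used above: one must justify that the Layer~3 joint $P(\mathbf{e}_x,\mathbf{e}^d)$ of the interventional and the factual (empty-intervention) worlds is a product conditional on $U=u$. This is not a generic probabilistic identity; it is the structural content of the DiscoSCM stance that the exogenous-noise realization is uncontrollable, so a fresh realization is drawn in the interventional submodel — exactly the mechanism by which the degeneration forced by the ordinary consistency rule is avoided, and the structural counterpart of Assumption~\ref{assump:distri-consist}. It is worth stressing that this is weaker than, and logically separate from, the Independent Counterfactual Noises assumption invoked later, which concerns independence \emph{among several} nontrivial interventional worlds $\mathbf{E}(x),\mathbf{E}(x')$ and is exactly what the present theorem does \emph{not} need. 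Two smaller points also need care: reading $\stackrel{d}{=}$ in Assumption~\ref{assump:distri-consist} as equality of the \emph{conditional} laws given $X=x,U=u$ (after which the independence above is still needed to drop the conditioning on $X=x$), and invoking the empty-intervention convention identifying the factual world with the $\mathbf{E}^d$-submodel so that $e$ is a legitimate argument of a Layer~3 valuation.
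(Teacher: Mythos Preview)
The paper does not prove this theorem: it is imported from the cited reference \cite{gong2024discoscm} and merely stated in Section~2 as a building block for Theorem~\ref{theo:population} and the later bounds. There is therefore no in-paper argument to compare your proposal against.

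On its own merits your route is the natural one and is essentially correct. You correctly isolate the two ingredients --- that conditioning on factual evidence $e$ is vacuous for the counterfactual once $U=u$ is fixed (because the interventional noise is drawn fresh, the ``uncontrollable exogenous noise'' stance), and that distribution-consistency then gives $P(y_x\mid x;u)=P(y\mid x;u)$ --- and you correctly flag the factorization as the substantive step and as logically weaker than the full ICN of Definition~\ref{def:indepDiscoSCM}.

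One point worth tightening concerns which noise carries the evidence. You route $e$ through the empty-intervention submodel with noise $\mathbf{E}^d$, so that your factorization reads $P(\mathbf{e}_x,\mathbf{e}^d)=P(\mathbf{e}_x)P(\mathbf{e}^d)$. But formally that \emph{is} an instance of the ICN product (Definition~\ref{def:indepDiscoSCM} places no ``nontrivial intervention'' restriction), which sits awkwardly with the fact that Theorem~\ref{theo:individual} is stated \emph{before} and independently of ICN. The cleaner reading --- consistent with the paper's ordering and with Theorem~\ref{theo:population} updating only $P(u\mid e)$ --- is that $e$ lives in the \emph{factual} world with noise $\mathbf{E}$, and the baked-in DiscoSCM semantics is that the factual $\mathbf{E}$ is independent of every counterfactual $\mathbf{E}(x)$ given $U=u$; ICN is then the \emph{additional} assumption of independence among the counterfactual noises themselves. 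Your argument goes through verbatim with $\mathbf{E}$ in place of $\mathbf{E}^d$; only the bookkeeping of which independence you are invoking changes.
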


Individual-level valuations (e.g. $P(y_x|e;u)$) are primitives while population-level valuations  (e.g. $P(y_x|e)$) are derivations.

\begin{theorem}[\textbf{Population-Level Valuations}  \cite{gong2024discoscm}]
\label{theo:population}
Consider a DiscoSCM wherein $Y(x)$ is the counterfactual outcome, and \(e\) represents the observed trace or evidence. The Layer 3 valuation \(P(Y^d(x)|e)\) is computed through the following process:

\textbf{Step 1 (Abduction):} Derive the posterior distribution \(P(u|e)\) of the unit selection variable \(U\) based on the evidence \(e\).

\textbf{Step 2 (Valuation):} Compute individual-level valuation \(P(y_x;u)\) in Def. \ref{def:semantics} for each unit \(u\).

\textbf{Step 3 (Reduction):} Aggregate these individual-level valuations to obtain the population-level valuation as follows:
\begin{equation}
\label{eq:population}
P(Y^d(x)=y|e) = \sum_u P(y_x;u) P(u|e),
\end{equation}
\end{theorem}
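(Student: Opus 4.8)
The plan is to obtain Eq.~\eqref{eq:population} directly from the law of total probability applied to the unit selection variable $U$, using Theorem~\ref{theo:individual} to collapse the dependence on the evidence at the individual level. First I would write the \textbf{Reduction} skeleton: since $P(u)$, and hence the posterior $P(u\mid e)$, is a well-defined distribution over the index set of $U$,
\begin{equation*}
P(Y^d(x)=y\mid e) = \sum_u P(Y^d(x)=y\mid e;u)\, P(u\mid e),
\end{equation*}
so it only remains to identify the two factors with the quantities named in Steps~1 and~2 of the theorem.

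For the \textbf{Valuation} factor, Theorem~\ref{theo:individual} gives $P(y_x\mid e;u) = P(y_x;u)$; that is, at the granularity of a single unit the counterfactual-outcome probability is insensitive to the factual trace $e$. The conceptual reason I would spell out is the DiscoSCM structure of Definition~\ref{def:discoscm} together with the intervention semantics: the counterfactual noise $\mathbf{E}^d(x)$ is a \emph{fresh} copy, independent of $U$ and of the factual noise $\mathbf{E}$ that produced $e$, and carries the same distribution as $\mathbf{E}$. Hence conditioning on $e$ (a function of $\mathbf{E}$ and $U$) does not alter the conditional law of $\mathbf{Y}^d(x)$ given $U=u$. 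Substituting $P(y_x;u)$ for $P(Y^d(x)=y\mid e;u)$ in the display above yields Eq.~\eqref{eq:population}.

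For the \textbf{Abduction} factor, $P(u\mid e)$ is obtained by Bayes' rule, $P(u\mid e)\propto P(e\mid u)\,P(u)$, where $P(e\mid u)$ is the Layer~1 likelihood of the trace for unit $u$ supplied by Eq.~\eqref{eq:def:l1-semantics_new} (e.g., for $e=\{X=x,Y=y\}$ it is $P(x,y;u)$). I would remark that this is precisely the classical abduction step of the three-step counterfactual recipe, now carried out on $U$ rather than on the exogenous noise, which is what makes the recipe in the theorem a genuine generalization.

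The step I expect to be the main obstacle is the rigorous justification that the evidence $e$ updates beliefs only about $U$ and leaves the counterfactual-noise channel untouched, i.e. the identity $P(y_x\mid e;u)=P(y_x;u)$. If Theorem~\ref{theo:individual} may be cited as stated, this is a one-line invocation; if it must be re-derived here, the argument is the independence/fresh-copy property of $\mathbf{E}^d(x)$ from Definition~\ref{def:discoscm}, together with the observation that $e$ is measurable with respect to $\sigma(\mathbf{E},U)$ while $\mathbf{Y}^d(x)$ is measurable with respect to $\sigma(\mathbf{E}^d(x),U)$, so the two are conditionally independent given $U$. Everything else --- the total-probability decomposition and the Bayes inversion --- is routine.
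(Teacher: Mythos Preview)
The paper does not supply its own proof of this theorem: it is imported verbatim from \cite{gong2024discoscm} and stated without argument, so there is nothing in the present paper to compare your proposal against line by line.

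That said, your route is sound and is essentially the canonical one. The total-probability decomposition over $U$ followed by invoking Theorem~\ref{theo:individual} to replace $P(y_x\mid e;u)$ by $P(y_x;u)$ is exactly the intended mechanism behind the three-step recipe, and your measurability/independence justification (that $e$ is $\sigma(\mathbf{E},U)$-measurable while $\mathbf{Y}^d(x)$ is $\sigma(\mathbf{E}^d(x),U)$-measurable, with $\mathbf{E}^d(x)$ a fresh copy independent of $(\mathbf{E},U)$) is the right way to make Theorem~\ref{theo:individual} self-contained if you are not allowed to cite it. The abduction step via Bayes' rule on $U$ is likewise the standard reading. Nothing in your proposal is missing or would fail.
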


Hence, within this framework, counterfactual queries of the form $P(y_x|e)$ (e.g., Positive Necessity (PN) in Def. \ref{def:pn} and Positive Sufficiency (PS) in Def. \ref{def:ps}), have been effectively addressed, as evidenced by Theorem \ref{theo:individual} and \ref{theo:population}. However, for other forms of counterfactual inquiries, notably the Probability of Necessity and Sufficiency (PNS) as delineated in Def. \ref{def:pns}, the framework has yet to offer conclusive results. This means that the probability of being a complier, represented as $P(y_t, y'_{t'})$ in our motivating example, remains an unresolved issue within the DiscoSCM paradigm. The subsequent section will delve into approaches to tackle this challenge.

\section{Indeterminable Counterfactuals}

We contend that the core of causal modeling stems from bridging observed phenomena (for example, the dataset ${(t_u, y_u)}{u \in U}$) with causal quantities, such as causal graphs, structural equations, or Layer valuations. Thus, in discussions on counterfactual inference, it's pivotal to explore how it connects with the data. Given that we typically have observational data and randomized controlled trial (RCT) data at our disposal, a natural question arises: ``Can we identify $P(y_t, y'_{t'})$ solely from the data?'' Our response is that, in the absence of appropriate assumptions, no amount of data will enable you to identify it. We refer to this predicament as the issue of \textit{indeterminable counterfactuals}, which we will elucidate from various perspectives in the following discussion. 


The primary dimension to consider is that each individual possesses multiple counterfactual outcomes, yet only one observed datapoint is available in the dataset, representing a sample from one of these counterfactual outcomes. \footnote{This viewpoint signifies a departure from traditional frameworks. It's essential to recognize that within the DiscoSCM framework, any counterfactual outcomes in the counterfactual world are unobservable; only outcomes within the factual world manifest an observed trace. However, by adhering to the principle of distribution consistency, this observed trace is interpreted as a sample of one of the counterfactual outcomes. Therefore, the traditional assertion that "only one of the counterfactual outcomes can be observed" in \cite{holland1986statistics} is imprecise within the context of the DiscoSCM framework.} As a result, this singular sample within the dataset does not convey any information about the interrelations among counterfactual outcomes for that particular individual $u$. In essence, this means that the probability of concurrent counterfactual outcomes, represented as $P(y_t, y'_{t'})$, is fundamentally indiscernible.

At this point, counterfactual inference demands the integration of additional information, conditions, or assumptions. For instance, the utilization of Randomized Controlled Trial (RCT) data for estimating the Average Treatment Effect (ATE) or other causal parameters, coupled with the homogeneity of causal parameters across individuals, enables the derivation of individual-level causal parameter estimates. This approach aligns more closely with statistical methodologies. When employing machine learning techniques, it generally necessitates the availability of pre-treatment features for individuals, operating under the implicit assumption that individuals with similar features should exhibit similar outcomes. However, to our knowledge, none of these methodologies have explicitly engaged in the exploration of correlations among counterfactual outcomes at the individual level.

To further elucidate the indeterminable nature of $P(y_t, y'_{t'})$, let's consider an ideal randomized controlled trial (RCT) dataset, assumed to exhibit homogeneous counterfactuals across individuals. Imagine a hypothetical RCT dataset involving 8 users, with their observed outcomes as depicted in Table \ref{tab:unit_outcome}.
\begin{table}[htbp]
\centering
\caption{Samples for Counterfactual outcomes.}
\label{tab:unit_outcome}
\small
\setlength{\tabcolsep}{5pt}
\begin{tabular}{lcc}
\toprule
 & \multicolumn{2}{c}{Counterfactual Outcome $Y^d_u(t)$} \\
\cmidrule{2-3}
Units & Control & Treatment \\
\midrule
1 & 1 & - \\
2 & 0 & - \\
3 & 0 & - \\
4 & 1 & - \\
5 & - & 0 \\
6 & - & 1 \\
7 & - & 1 \\
8 & - & 1 \\
\bottomrule
\end{tabular}
\end{table}
In this dataset, leveraging the principle of homogeneity, the samples of $Y^d(1)$ for the latter four individuals are considered as approximations for the first four individuals' counterfactual outcomes under treatment. Thus, for these first four individuals, we seemingly have access to samples for $(Y^d(0), Y^d(1))$. This setup allows for the deduction that the absolute value of the correlation between $Y^d(0)$ and $Y^d(1)$ should be inferior to the correlation observed between the sequences [0, 0, 1, 1] and [0, 1, 1, 1], based on the premise of uniformity across individuals and their observed outcomes. However, such an analysis does not aid in clear identification, as the observed correlation can manifest as either positive or negative, highlighting the inherently indeterminate nature of deriving such correlations directly from data. More precisely, according to Sklar’s Theorem, the joint distribution information of $Y^d(0)$ and $Y^d(1)$ can be dissected into correlation information represented by a Copula function and marginal distributions information present in the data. Crucially, the information pertaining to the Copula function is not contained within the data. 
To more explicitly illustrate the indeterminate nature of $P(y_t, y'_{t'})$, , let's delve into the following simplest example of a DiscoSCM:
\begin{example}
\label{eg:noise}
Consider a DiscoSCM model for the outcome $Y$, influenced by a binary feature $X$ and a treatment $T$,
\begin{align*}
Y &= E, \\
Y^d(t) &= E_X(t),
\end{align*}
where $E \sim N(0, 1)$ signifies a standard normal distribution, and $E_X(t) \stackrel{d}{=} E$ denotes that $E_X(t)$ is equivalent in distribution to $E$.
\end{example}
This DiscoSCM presents a straightforward solution for counterfactual outcomes $(Y^d(0), Y^d(1))$, modeled as a bivariate normal distribution with standard normal marginals,\footnote{Simulation code is accessible at \url{https://colab.research.google.com/drive/1TcrQzPYd5g3i_hmh0r4TSWP2pbfsHvS9?usp=sharing}.} as illustrated in Fig. \ref{fig:noise_corr}. Consequently, the probability associated with counterfactuals $(Y^d(0), Y^d(1))$ is defined by the correlation between the counterfactual noises $E_X(0)$ and $E_X(1)$. Specifically, setting $E_X(t) = E$ results in a scenario akin to a traditional SCM, where the correlation between $Y(0)$ and $Y(1)$, depicted on the left side of Fig. \ref{fig:noise_corr}, becomes a fixed value of 1. Typically, the correlation between $Y(0)$ and $Y(1)$ varies, exhibiting diverse patterns based on the values of $X$, as shown in the middle subgraph of Fig. \ref{fig:noise_corr}, indicating the presence of heterogeneous counterfactual probabilities dependent on the variable $X$.
\begin{figure}[http]
\centering
\includegraphics[width=0.5\textwidth]{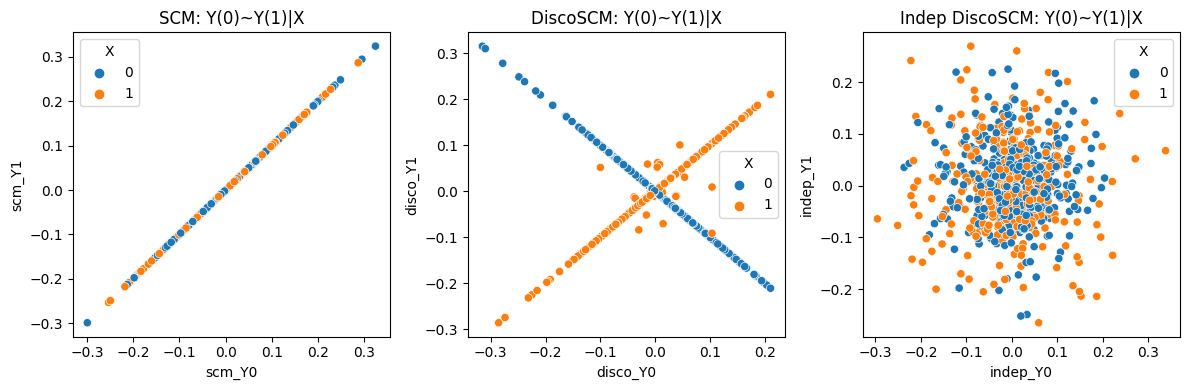}
\caption{The simplistic DiscoSCM from Example \ref{eg:noise} presenting three different correlation patterns.}
\label{fig:noise_corr}
\end{figure}

In conclusion, the probability $P(y_t, y'_{t'})$, depended on the correlation pattern among counterfactual outcomes as illustrated in the middle of Fig. \ref{fig:noise_corr}, remains indeterminate due to the inherent limitations concerning sample availability across different counterfactual worlds.\footnote{This limitation echoes the fundamental problem of causal inference in traditional frameworks, which posits that, for any given individual, at most one potential outcome is observable,  with all the others missing\cite{holland1986statistics}.} Thus, the critical question arises: What cross-world assumptions might we adopt to surmount these limitations? Fortunately, the third scenario, as depicted by the right subgraph of Fig.~\ref{fig:noise_corr}, computationally overcomes the inherent disadvantages of the first two, making DiscoSCM with independent counterfactual noises the main subject of our following section.

\section{DiscoSCM with Independent Counterfactual Noises}

Formally, the DiscoSCM with independent counterfactual noises is defined as follows:
\begin{definition}
\label{def:indepDiscoSCM}
A DiscoSCM $\langle U, \mathbf{E}, \mathbf{V}, \mathcal{F}\rangle$ with independent counterfactual noises induces a family of joint distributions over counterfactual outcomes $\*Y_{\*x}, \ldots, \*Z_{\*w}$, for any $\*Y, \*Z, \dots, \*X, \*W \subseteq \*V$, satisfying:
\begin{align}
     P(\*e_{\*x}, ..., \*e_{\*w}) = P(\*e_{\*x}) \cdots P(\*e_{\*w})
\end{align}
\end{definition}
The term ``independent counterfactual noises'' refers to the independence among the exogenous noises across different counterfactual worlds. Combined with Eq. \eqref{eq:def:l3-semantics_new}, the following theorem for individual-level Layer 3 valuations can be derived.

\begin{theorem}
\label{thm:layer3}
Given a DiscoSCM $\langle U, \mathbf{E}, \mathbf{V}, \mathcal{F}\rangle$ with independent counterfactual noises, the counterfactual outcomes $\mathbf{Y}_{\mathbf{x}}, \ldots, \mathbf{Z}_{\mathbf{w}}$ for any individual $u$ satisfy the following relationship:
\begin{align}
    P(\mathbf{y}_{\mathbf{x}},\dots,\mathbf{z}_{\mathbf{w}}) =  P(\mathbf{y}_{\mathbf{x}})\cdots P(\mathbf{z}_{\mathbf{w}})
\end{align}
\end{theorem}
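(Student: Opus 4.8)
The plan is to unfold the Layer~3 valuation from Eq.~\eqref{eq:def:l3-semantics_new}, exploit the fact that every counterfactual world is realized by a \emph{self-contained} interventional submodel driven by its own counterfactual noise, and then collapse the resulting sum with the independence hypothesis of Def.~\ref{def:indepDiscoSCM}. Concretely, fix an individual $U=u$. By the definition of the interventional DiscoSCM, the submodel $\langle U, \mathbf{E}(\mathbf{x}), \mathbf{V}, \mathcal{F}_{\mathbf{x}}\rangle$ that realizes $\mathbf{Y}^d(\mathbf{x})$ contains only the counterfactual noise $\mathbf{E}(\mathbf{x})$ (written $\mathbf{E}_{\mathbf{x}}$); hence, with $u$ fixed, the value of $\mathbf{Y}^d(\mathbf{x})$ is a deterministic function of $\mathbf{e}_{\mathbf{x}}$ alone and does not involve the other counterfactual noises $\mathbf{e}_{\mathbf{x}'},\dots,\mathbf{e}_{\mathbf{w}}$. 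Therefore the index set of the sum in Eq.~\eqref{eq:def:l3-semantics_new}, namely $\{(\mathbf{e}_{\mathbf{x}},\dots,\mathbf{e}_{\mathbf{w}}) \mid \mathbf{Y}^d(\mathbf{x})=\mathbf{y},\dots,\mathbf{Z}^d(\mathbf{w})=\mathbf{z}, U=u\}$, is exactly the Cartesian product $A_{\mathbf{x}}\times\cdots\times A_{\mathbf{w}}$, where $A_{\mathbf{x}}=\{\mathbf{e}_{\mathbf{x}} \mid \mathbf{Y}^d(\mathbf{x})=\mathbf{y}, U=u\}$ and similarly for the remaining factors --- and these are precisely the index sets defining the Layer~2 valuations $P(\mathbf{y}_{\mathbf{x}};u),\dots,P(\mathbf{z}_{\mathbf{w}};u)$ in Eq.~\eqref{eq:def:l2-semantics_new}.

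The second step is to substitute the factorization $P(\mathbf{e}_{\mathbf{x}},\dots,\mathbf{e}_{\mathbf{w}}) = P(\mathbf{e}_{\mathbf{x}})\cdots P(\mathbf{e}_{\mathbf{w}})$ from Def.~\ref{def:indepDiscoSCM} into Eq.~\eqref{eq:def:l3-semantics_new} and distribute the sum over the product set:
\begin{align*}
P(\mathbf{y}_{\mathbf{x}},\dots,\mathbf{z}_{\mathbf{w}}; u) &= \sum_{A_{\mathbf{x}}\times\cdots\times A_{\mathbf{w}}} P(\mathbf{e}_{\mathbf{x}})\cdots P(\mathbf{e}_{\mathbf{w}}) \\
&= \Big(\sum_{A_{\mathbf{x}}} P(\mathbf{e}_{\mathbf{x}})\Big)\cdots\Big(\sum_{A_{\mathbf{w}}} P(\mathbf{e}_{\mathbf{w}})\Big) = P(\mathbf{y}_{\mathbf{x}};u)\cdots P(\mathbf{z}_{\mathbf{w}};u),
\end{align*}
which is the asserted identity (the $u$ being kept implicit in the statement, since it holds for an arbitrary individual). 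In the continuous-noise case the sums become integrals against $P(\mathbf{e}_{\bullet})$ and the same Fubini/Tonelli-style factorization over the product set applies verbatim.

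The step I expect to be the main obstacle is making the ``Cartesian product'' claim airtight: one must argue carefully that the constraints $\mathbf{Y}^d(\mathbf{x})=\mathbf{y},\dots,\mathbf{Z}^d(\mathbf{w})=\mathbf{z}$ coming from distinct counterfactual worlds really do constrain \emph{disjoint} blocks of noise coordinates, i.e., that $\mathbf{E}(\mathbf{x}),\dots,\mathbf{E}(\mathbf{w})$ are genuinely distinct indexed copies rather than one shared $\mathbf{E}$. This is exactly the structural feature that separates DiscoSCM from a classical SCM (where all worlds share $\mathbf{E}$ and the product would collapse to a single diagonal sum), and it is what licenses treating the joint index set as a product. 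Once this is granted, no further property of the structural equations $\mathcal{F}$ is needed, and the remainder is the routine distribution of a sum/integral over a product domain.
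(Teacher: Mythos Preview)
Your proposal is correct and follows essentially the same approach as the paper: unfold the Layer~3 valuation of Eq.~\eqref{eq:def:l3-semantics_new}, apply the factorization of Def.~\ref{def:indepDiscoSCM}, and split the sum into a product of Layer~2 valuations. The paper's proof simply writes this chain of equalities directly, whereas you make explicit (and correctly justify) the Cartesian-product structure of the summation domain that the paper leaves implicit.
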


\begin{proof}
Leveraging Def. \ref{def:semantics} and the independence among counterfactual noises, the joint probability of counterfactual outcomes for an individual $u$:
\begin{align*}
    &P(\mathbf{y}_{\mathbf{x}},\dots,\mathbf{z}_{\mathbf{w}}; u) \\
    &= \sum_{\substack{\{\mathbf{e}_{\mathbf{x}}, \cdots , \mathbf{e}_{\mathbf{w}} \mid \mathbf{Y}^d(\mathbf{x})=\mathbf{y},\cdots, \mathbf{Z}^d(\mathbf{w})=\mathbf{z}, U=u\}}} P(\mathbf{e}_{\mathbf{x}}, \ldots, \mathbf{e}_{\mathbf{w}}) \\
    &= \sum_{\substack{\{\mathbf{e}_{\mathbf{x}}, \cdots , \mathbf{e}_{\mathbf{w}} \mid \mathbf{Y}^d(\mathbf{x})=\mathbf{y},\cdots, \mathbf{Z}^d(\mathbf{w})=\mathbf{z}, U=u\}}} P(\mathbf{e}_{\mathbf{x}}) \cdots P(\mathbf{e}_{\mathbf{w}}) \\
    &= \sum_{\substack{\{\mathbf{e}_{\mathbf{x}}\mid \mathbf{Y}^d(\mathbf{x})=\mathbf{y}, U=u\}}} P(\mathbf{e}_{\mathbf{x}}) \cdots \sum_{\substack{\{\mathbf{e}_{\mathbf{w}}\mid \mathbf{Z}^d(\mathbf{w})=\mathbf{z}, U=u\}}} P(\mathbf{e}_{\mathbf{w}}) \\
    &=  P(\mathbf{y}_{\mathbf{x}}; u)\cdots P(\mathbf{z}_{\mathbf{w}}; u).
\end{align*}
\end{proof}
This theorem simplifies a Layer 3 valuation into the product of Layer 2 valuations, where the latter represents an identifiable and tractable causal question that can be addressed with statistical or machine learning methods. Such a result  seems almost too good to be true, prompting the subsequent sections of this paper to delve deeply into a comprehensive analysis, ensuring the soundness and efficacy of this model. An illustrative example follows:

\begin{example}
\label{eg:discoSCMs}
Consider a DiscoSCM for the outcome $Y$ with features $X_0, X_1, X_2$ and binary treatment $T$:
\begin{align*}
    Y &=   0.5 I[X_0=1] \cdot (T+1) + 0.1 X_2 \cdot E  \\
    Y^d(t) &= 0.5 I[X_0=1] \cdot (t+1) + 0.1 X_2 \cdot E_{X_1}(t)
\end{align*}
where $E, E_{X_1}(t) \sim N(0, 1), t=0, 1$ denote the noise and counterfactual noises respectively. 
\end{example}
As an extension of Example \ref{eg:noise}, Fig. \ref{fig:indepDiscoSCM} showcases an RCT dataset produced by this DiscoSCM. The features $X_0, X_1, X_2$ respectively govern heterogeneous causal effects, counterfactual noises correlations, and consistency probabilities. The top row of the figure displays three unique DiscoSCM models with identical Layer 1 and Layer 2 valuations. Rows two to four illustrate the heterogeneous Layer 3 valuations stemming from different counterfactual noises correlation patterns.
\begin{figure}[http]
    \centering
    \includegraphics[width=0.49\textwidth]{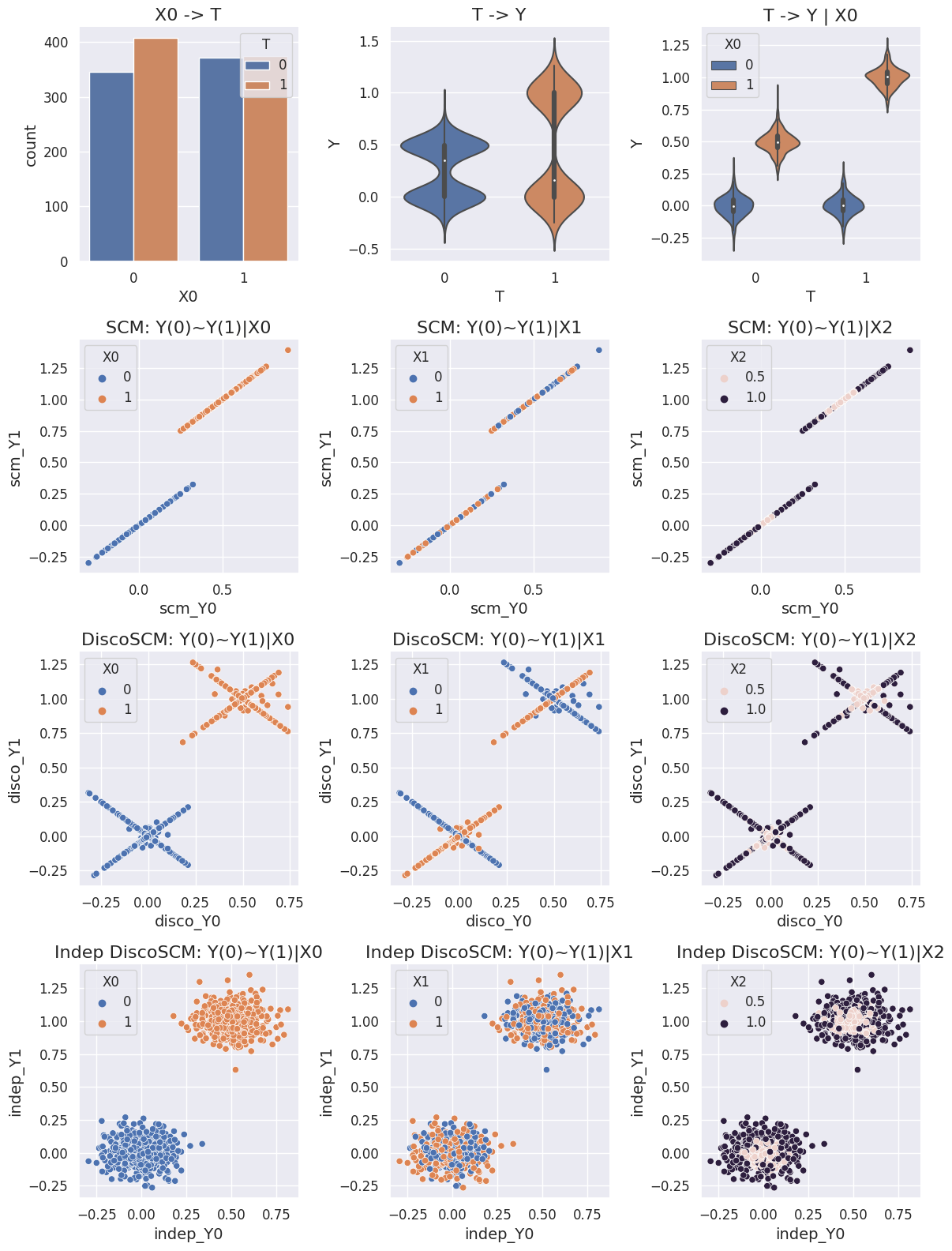}
    \caption{\small DiscoSCM with heterogeneous causal effects, counterfactual noises correlations, and  probabilities of consistency. The type of this DiscoSCM depends on the correlation pattern among counterfactual noises $E_{X_1}(t)$, see Fig. \ref{fig:noise_corr}: if the correlation coefficient is 1, it resembles an ordinary SCM; if there is some correlation, it is a general DiscoSCM with indeterminable heterogeneous counterfactuals; if the counterfactual noises are independent, it becomes a DiscoSCM where Layer 3 individual-level counterfactuals can be reduced to Layer 2 valuations. Specifically, for the counterfactual parameters $corr(Y^d_u(0), Y^d_u(1))$, the second row of Fig. \ref{fig:indepDiscoSCM} shows that it is always 1 in the SCM, while the third row reveals that its value lies between 0 and 1 in the general DiscoSCM, showing heterogeneity according to $X_1$. The fourth row of Fig. \ref{fig:indepDiscoSCM} demonstrates that this parameter is always 0 in a DiscoSCM with independent potential noise. To summarize, when the correlation between counterfactual noises is 1, as is the case in SCM, complete knowledge of structural equations is required to deducing for counterfactuals. When counterfactual noises exhibit some correlation, neither RCT or observational data can help recover related counterfactual parameters. Lastly, when counterfactual noises are independent, Layer 3 valuation can be reduced to Layer 2, typically allowing them to be learned from data.}
    \label{fig:indepDiscoSCM}
\end{figure}

The preceding example illustrates the enhanced capability of the DiscoSCM framework to represent causal information, notably the heterogeneous probabilities of counterfactual outcomes $P(y_t, y'_{t'})$ across individuals. When we assume individuals within a population have homogeneous Layer 3 valuations, we can leverage Theorem \ref{thm3_new} to further identify counterfactual parameters at the population level effectively. The theorem simplifies the calculation of these probabilities as follows:
\begin{align*}
    P(y_t, y'_{t'})& \delequal P(Y^d(t) = y, Y^d(t') = y') \\
    &= \sum_u P(Y^d_u(t) = y, Y^d_u(t') = y') P(U=u) \\
    &= \sum_u P(Y^d_u(t) = y)P(Y^d_u(t') = y') P(U=u) \\
    &= P(Y^d(t) = y)P(Y^d(t') = y') \sum_u  P(U=u) \\
    &= P(y_t) P(y'_{t'})
\end{align*}
where $P(y_t)$ and $P(y'_{t'})$ are usually identifiable with RCT data. For binary treatment and binary outcome scenarios, the PNS $P(y_t, y'_{t'})$ represents a probability of causation parameter, signifying the probability that an individual is a complier. This showcases the capability of the DiscoSCM framework to utilize assumptions of homogeneity and ICN, thereby simplifying and identifying population-level counterfactual parameters into simpler and identifiable components.

In scenarios lacking the ICN assumption, DiscoSCM might not directly identify counterfactuals; however,  it is still feasible to derive theoretical heterogeneous bounds across individuals for counterfactuals regarding probability of causation, such as $P(y_t, y'_{t'})$. These bounds, derived in the forthcoming section, are identifiable or can be learned from data, thereby enhancing our understanding of how to embrace heterogeneity in counterfactual inference.

\section{Bounds of Counterfactuals}

In recent years, a plethora of studies have set forth  many results concerning bounds for population-level counterfactuals, applying these findings to the unit selection problem \cite{li2019, li2022unit}.  The DiscoSCM framework enables an intricate examination of bounds at the individual level. \footnote{Owing to Theorem \ref{theo:population}, these individual-level theorems naturally extend to applicable (sub-)population-level implications, offering a more nuanced understanding.}. This shift from analyzing population-level effects to individual-level facilitates discussions on non-degenerative heterogeneous counterfactuals concerning the probability of causation.
Before delving deeper, the following essential lemma is presented:
\begin{lemma}
In a DiscoSCM, for any individual $u$ with a binary treatment $T$ and outcome $Y$:
\begin{equation}
\label{eq:obs_outcome}
    P(y) = P(y_t) P(t) + P(y_{t'}) P(t')
\end{equation}
\end{lemma}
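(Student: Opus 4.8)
The plan is to unfold the left-hand side $P(y)$ using the Layer 1 valuation in Eq.~\eqref{eq:def:l1-semantics_new} and then insert the factual treatment assignment as a latent intermediate. For a fixed individual $u$, write $P(y) = \sum_t P(y, t)$, partitioning over the two values $t$ and $t'$ of the binary treatment $T$. The summand $P(y, t)$ is then rewritten as $P(y \mid t)\,P(t)$, so the immediate target becomes showing $P(y \mid t) = P(y_t)$ and $P(y \mid t') = P(y_{t'})$ for the individual $u$; once this is in hand, Eq.~\eqref{eq:obs_outcome} follows by substitution.

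The key observation is that this identity $P(y \mid t; u) = P(y_t; u)$ is exactly the content of Theorem~\ref{theo:individual} (Individual-Level Valuations), which asserts $P(y_x \mid e; u) = P(y_x; u) = P(y \mid x; u)$ — the equality of Layer~1, Layer~2, and Layer~3 individual-level probabilities of an outcome. So the first step is to invoke Theorem~\ref{theo:individual} with the treatment value $x = t$ (and separately $x = t'$) to replace the conditional observational probability $P(y \mid t; u)$ by the interventional quantity $P(y_t; u)$. The second step is to assemble: starting from $P(y; u) = P(y \mid t; u) P(t; u) + P(y \mid t'; u) P(t'; u)$ by the law of total probability applied to the factual distribution of the individual, substitute the two replacements to obtain $P(y; u) = P(y_t; u) P(t; u) + P(y_{t'}; u) P(t'; u)$, which is the claimed display (with the individual index $u$ suppressed in the statement as elsewhere in the paper).

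One subtlety worth spelling out is why the law of total probability is legitimate here at the individual level: within the DiscoSCM, for a fixed $U = u$ the endogenous variables $X$ and $Y$ are jointly determined by the noise $\mathbf{E}$ through $\mathcal{F}$, so $P(y; u) = \sum_e P(\mathbf{e})\,\mathbb{I}[Y = y] = \sum_t \sum_{\{\mathbf{e}\,:\,X = t,\, Y = y\}} P(\mathbf{e})$, which is precisely $\sum_t P(y, t; u)$, and then the standard factorization $P(y, t; u) = P(y \mid t; u) P(t; u)$ applies whenever $P(t; u) > 0$ (the degenerate case $P(t; u) \in \{0,1\}$ being trivial). I do not expect any genuine obstacle here: the lemma is essentially a repackaging of Theorem~\ref{theo:individual} plus elementary probability, and the only thing to be careful about is to keep the conditioning event and the intervention on the \emph{same} individual $u$ throughout, so that the distribution-consistency machinery (rather than the ordinary consistency rule) is what licenses the Layer~1 $=$ Layer~2 step.
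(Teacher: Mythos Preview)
Your proposal is correct and follows essentially the same route as the paper: apply the law of total probability over the binary treatment $T$ at the individual level, then invoke Theorem~\ref{theo:individual} to replace $P(y\mid t;u)$ by $P(y_t;u)$ (and likewise for $t'$). The paper's proof is just the compact two-line version of what you wrote out; your additional remarks on why total probability is legitimate at the individual level are sound but not needed for the argument.
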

\begin{proof} 
By probability formula and Theorem \ref{theo:individual}:
\begin{align*}
    P(y;u) 
    &\delequal P(Y=y|U=u) \\ 
    & = P(Y=y|T=t, U=u) P(T=t|U=u)  \\
    &\quad + P(Y=y|T=t', U=u) P(T=t'|U=u)\\
    &= P(y|t; u) P(t;u) + P(y|{t'};u) P(t';u) \\
    &= P(y_t;u) P(t;u) +  P(y_{t'};u) P(t';u)
\end{align*}
\end{proof}

The following results can then be derived :
\begin{theorem}
\label{thm:bound}
In a DiscoSCM, for any individual $u$ with an observed binary treatment $T$, outcome $Y$:\footnote{Preliminaries on probability of causation parameters PNS and PN are provided in the Appendix. \ref{sec:poc}}
\begin{eqnarray}
\small
\max \left \{
\begin{array}{cc}
0 \\
P(y_t) - P(y_{t'}) \\
P(y) - P(y_{t'}) \\
P(y_t) - P(y)\\
\end{array}
\right \}
\small
\le \text{PNS}
\label{pnslb}
\end{eqnarray}
\begin{eqnarray}
\small
\text{PNS} \le \min \left \{
\begin{array}{cc}
 P(y_t) \\
 P(y'_{t'}) \\
P(t,y) + P(t',y') \\
P(y_t) - P(y_{t'}) +\\
+ P(t, y') + P(t', y)
\end{array} 
\right \}
\label{pnsub}
\end{eqnarray}
\begin{eqnarray}
\small
\max \left \{
\begin{array}{cc}
0 \\
\frac{P(y)-P(y_{t'})}{P(t,y)}
\end{array} 
\right \}
\le \text{PN}
\label{pnlb}
\end{eqnarray}
\begin{eqnarray}
\text{PN} \le
\min \left \{
\begin{array}{cc}
1 \\
\frac{P(y'_{t'})-P(t',y')}{P(t,y)} 
\end{array}
\right \}
\label{pnub}
\end{eqnarray}
\end{theorem}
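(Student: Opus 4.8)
The plan is to fix an individual $u$, drop the index $;u$ from all quantities, and first reduce every ``observational'' term in the statement to the three numbers $P(t)$, $P(y_t)$, $P(y_{t'})$. By Theorem~\ref{theo:individual} one has $P(y\mid t)=P(y_t)$ and $P(y\mid t')=P(y_{t'})$, hence $P(t,y)=P(t)P(y_t)$, $P(t,y')=P(t)\bigl(1-P(y_t)\bigr)$, $P(t',y)=P(t')P(y_{t'})$, $P(t',y')=P(t')\bigl(1-P(y_{t'})\bigr)$, and, since $Y$ is binary, $P(y'_{t'})=1-P(y_{t'})$; moreover the Lemma \eqref{eq:obs_outcome} gives $P(y)-P(y_{t'})=P(t)\bigl(P(y_t)-P(y_{t'})\bigr)$ and $P(y_t)-P(y)=P(t')\bigl(P(y_t)-P(y_{t'})\bigr)$. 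After this substitution the theorem is a finite list of inequalities involving $P(t),P(y_t),P(y_{t'})$ and the two target quantities $\text{PNS}=P(y_t,y'_{t'})$ and $\text{PN}=P(y'_{t'}\mid t,y)$, to be obtained by Fr\'echet--Hoeffding / linear-programming arguments on a latent counterfactual contingency table, i.e.\ the unit-level analogue of the Balke--Pearl and Tian--Pearl derivations.

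For the PNS bounds, observe that $\text{PNS}$ is a single cell of the joint law of $\bigl(Y^d(t),Y^d(t')\bigr)$, whose margins are $P(y_t)$ and $P(y'_{t'})=1-P(y_{t'})$. The Fr\'echet--Hoeffding inequalities give $\max\{0,P(y_t)+P(y'_{t'})-1\}\le\text{PNS}\le\min\{P(y_t),P(y'_{t'})\}$, and $P(y_t)+P(y'_{t'})-1=P(y_t)-P(y_{t'})$ supplies the second entry of \eqref{pnslb} and the first two entries of \eqref{pnsub}. The remaining entries of \eqref{pnslb} are then immediate: by the first paragraph $P(y)-P(y_{t'})$ and $P(y_t)-P(y)$ are $P(t)$- and $P(t')$-multiples of $P(y_t)-P(y_{t'})$, so they never exceed $\max\{0,P(y_t)-P(y_{t'})\}$. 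The remaining entries of \eqref{pnsub} follow by a convexity step: using $1=P(t)+P(t')$ and $\text{PNS}\le P(y_t)$, $\text{PNS}\le P(y'_{t'})$ gives $\text{PNS}\le P(t)P(y_t)+P(t')P(y'_{t'})=P(t,y)+P(t',y')$, and rewriting $P(y_t)-P(y_{t'})+P(t,y')+P(t',y)=P(t')P(y_t)+P(t)P(y'_{t'})$ and applying the same weighting yields the fourth entry.

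For the PN bounds, write $\text{PN}=P\bigl(Y^d(t')=y',T=t,Y=y\bigr)\big/P(t,y)$, where the denominator is already pinned to $P(t)P(y_t)$. The numerator is a cell of the joint law of the counterfactual outcome $Y^d(t')$ together with the factual pair $(T,Y)$, with relevant margins $P(y'_{t'})$, $P(t,y)$ and $P(t',y')$; I would cut out the feasible interval for this cell by a linear program over that joint (using $P(Y^d(t')=y',T=t,Y=y)=P(y'_{t'})-P\bigl(Y^d(t')=y',\ \text{not }\{T=t,Y=y\}\bigr)$ and the disjointness of $\{T=t,Y=y\}$ and $\{T=t',Y=y'\}$ within $\{Y^d(t')=y'\}$), then substitute the identities of the first paragraph and divide by $P(t,y)$, which should turn the two endpoints into $\max\{0,(P(y)-P(y_{t'}))/P(t,y)\}$ and $\min\{1,(P(y'_{t'})-P(t',y'))/P(t,y)\}$, i.e.\ \eqref{pnlb}--\eqref{pnub}.

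I expect the PNS half to be essentially routine once the first paragraph is done, and the PN half to be the real obstacle. The difficulty is to pin down exactly which dependencies between the factual evidence $\{T=t,Y=y\}$ and the counterfactual outcome $Y^d(t')$ are admissible in a \emph{general} DiscoSCM (no ICN): a naive Fr\'echet bound on that pair alone is not tight (it would replace $P(y)$ by $P(t,y)$ in \eqref{pnlb}), so one must show that the binding constraints are precisely the margins forced by Theorem~\ref{theo:individual}, and that the resulting polytope has vertices giving exactly the four listed expressions --- the unit-level counterpart of the Tian--Pearl polytope computation. The two things to verify carefully are (i) that no additional (distribution-)consistency constraint tightens the feasible region further, and (ii) that each listed bound is attained by some DiscoSCM, so the inequalities are best possible.
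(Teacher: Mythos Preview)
Your treatment of the PNS bounds \eqref{pnslb}--\eqref{pnsub} is correct and is essentially the paper's argument: Fr\'echet--Hoeffding gives $\max\{0,P(y_t)-P(y_{t'})\}\le\text{PNS}\le\min\{P(y_t),P(y'_{t'})\}$, the two remaining lower bounds are $P(t)$- and $P(t')$-multiples of $P(y_t)-P(y_{t'})$ via the Lemma, and the two remaining upper bounds are the convex combinations $P(t)P(y_t)+P(t')P(y'_{t'})$ and $P(t')P(y_t)+P(t)P(y'_{t'})$ after the substitution $P(t,y)=P(t)P(y_t)$, etc.

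Where you go astray is the PN half. You cast \eqref{pnlb}--\eqref{pnub} as a genuine bounding problem---a linear program over the joint of $Y^d(t')$ and the factual pair $(T,Y)$---and flag as ``the real obstacle'' the question of which dependencies between $Y^d(t')$ and $\{T=t,Y=y\}$ a general DiscoSCM permits. But Theorem~\ref{theo:individual} already answers that question: at the individual level $P(y_x\mid e;u)=P(y_x;u)$ for \emph{any} evidence $e$, so in particular
\[
\text{PN}=P(y'_{t'}\mid t,y;u)=P(y'_{t'};u).
\]
There is no polytope; the value is pinned exactly. Consequently \eqref{pnlb}--\eqref{pnub} are not partial-identification bounds at all, but algebraic envelopes that the number $P(y'_{t'})$ always satisfies. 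The paper's proof is precisely this: plug $\text{PN}=P(y'_{t'})$, use $P(t,y)=P(t)P(y_t)$ and $P(t',y')=P(t')P(y'_{t'})$, and reduce each inequality to $P(y_t)\le 1$. Concretely,
\[
\frac{P(y)-P(y_{t'})}{P(t,y)}=\frac{P(t)\bigl(P(y_t)-P(y_{t'})\bigr)}{P(t)P(y_t)}=1-\frac{P(y_{t'})}{P(y_t)}\le 1-P(y_{t'})=P(y'_{t'}),
\]
\[
\frac{P(y'_{t'})-P(t',y')}{P(t,y)}=\frac{P(y'_{t'})P(t)}{P(t)P(y_t)}=\frac{P(y'_{t'})}{P(y_t)}\ge P(y'_{t'}).
\]
Your worries about tightness (point~(ii)) are therefore misplaced for PN: the theorem does not assert these bounds are sharp, and in the DiscoSCM setting they generally are not---they are simply the classical Tian--Pearl expressions shown to enclose the exactly determined individual-level value.
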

\begin{proof}
The first part of Eq. \eqref{pnslb} is trivial. The second part is
\begin{align*}
&P(y_t, y'_{t'}) \geq P(y_t) - P(y_{t'})\\
&\Leftrightarrow P(y_{t'}) \geq P(y_t) - P(y_t, y'_{t'}) \\
&\Leftrightarrow P(y_{t'}) \geq P(y_t, y_{t'})
\end{align*}
This proves the second part. The third part of Eq. \eqref{pnslb} is
\begin{align*}
&P(y_t, y'_{t'}) \geq P(y) - P(y_{t'}) \\
&\Leftrightarrow P(y_t, y'_{t'}) \geq P(y_t) P(t) + P(y_{t'}) P(t') - P(y_{t'}) \\
&\Leftrightarrow P(y_t, y'_{t'}) \geq P(t) (P(y_t) - P(y_{t'}))
\end{align*}
Using the conclusion from the second part of Eq. \eqref{pnslb} and the condition $P(t) \leq 1$, we can prove the third part. The fourth part of Eq. \eqref{pnslb} is
\begin{align*}
&P(y_t, y'_{t'}) \geq P(y_t) - P(y) \\
&\Leftrightarrow P(y_t, y'_{t'}) \geq P(y_t) - P(y_t) P(t) - P(y_{t'}) P(t') \\
&\Leftrightarrow P(y_t, y'_{t'}) \geq P(t') (P(y_t) - P(y_{t'}))
\end{align*}
This can also be proven using the conclusion from the second part of Eq. \eqref{pnslb} and the condition $P(t') \leq 1$. This concludes the proof of Eq. \eqref{pnslb} of the theorem.

The first two parts of Eq. \eqref{pnsub} is trival. The third part is:
\begin{align*}
&P(y_t, y'_{t'}) \leq P(t, y) + P(t', y') \\
&\Leftrightarrow P(y_{t'}) \leq P(y|t) P(t) + P(y'|t') P(t') \\
&\Leftrightarrow P(y_t, y'_{t'})  \leq P(t) P(y_t) + P(t') P(y'_{t'})
\end{align*}
It's evident given the first two parts of Eq. \eqref{pnsub}. The fourth part of Eq. \eqref{pnsub} is
\begin{align*}
&P(y_t, y'_{t'}) \leq P(y_t) - P(y_{t'}) + P(t, y') + P(t', y) \\
&\Leftrightarrow P(y_{t'}) \leq P(y_t) - P(y_{t'}) + P(y'|t) P(t) + P(y|t') P(t') \\
&\Leftrightarrow P(y_{t'}) \leq P(y_t) - P(y_{t'}) + P(y'_t) P(t) + P(y_{t'}) P(t') \\
&\Leftrightarrow P(y_t, y'_{t'})  \leq P(t') P(y_t) + P(t) P(y'_{t'})
\end{align*}
This can also be proven using the first two parts of Eq. \eqref{pnsub}. This concludes the proof of Eq. \eqref{pnsub} of the theorem.

The first part of Eq. \ref{pnlb} is trivial. The second part is
\begin{align*}
&P(y'_{t'}|t, y) \geq \frac{P(y) - P(y_{t'})}{P(t, y)} \\
&\Leftrightarrow P(y'_{t'}) P(t, y) \geq P(y) - P(y_{t'}) \\
&\Leftrightarrow P(y'_{t'}) P(y_t) P(t) + P(y_{t'}) \geq P(y_t) P(t) + P(y_{t'}) P(t') \\
&\Leftrightarrow P(y_{t'}) P(t) \geq P(y_t) P(t) P(y_{t'}) \\
&\Leftrightarrow 1 \geq P(y_t) 
\end{align*}
This proves the second part of Eq. \eqref{pnlb}.

The first part of Eq. \ref{pnub} is trivial. The second part is
\begin{align*}
&P(y'_{t'}|t, y) \leq \frac{P(y'_{t'}) - P({t'}, y')}{P(t, y)} \\
&\Leftrightarrow P(y'_{t'}) P(t, y) \leq P(y'_{t'}) - P({t'}, y') \\
&\Leftrightarrow P(y'_{t'}) P(y_t) P(t) \leq P(y'_{t'}) - P(y'|t') P(t') \\
&\Leftrightarrow P(y'_{t'}) P(y_t) P(t) \leq P(y'_{t'}) P(t) \\
&\Leftrightarrow 1 \leq P(y_t)
\end{align*}
This proves the second part of Eq. \eqref{pnub}.
\end{proof}
When additional structural information is available, tighter bounds can also be derived. 
\begin{theorem}
In a DiscoSCM, for any individual $u$ with a observed binary treatment $T$, outcome $Y$, and a partial mediator $Z$: 
\begin{flushleft}
\small
\begin{eqnarray}
\max\left\{
\begin{array}{c}
0,\\
P(y_t)-P(y_{t'}),\\
P(y)-P(y_{t'}),\\
P(y_t)-P(y)\\
\end{array}
\right\}\le \text{PNS}
\label{lb_mediator_plus_direct}
\end{eqnarray}
\end{flushleft}
\begin{flushleft}
\small
\begin{eqnarray}
\min\left\{
\begin{array}{c}
P(y_t),\\
P(y'_{t'}),\\
P(y,t)+P(y',t'),\\
P(y_t)-P(y_{t'}) + P(y,t')+P(y',t),\\
\sum_z \sum_{z'} \min\{P(y|z,t), P(y'|z',t')\}\\
\times \min\{P(z_t),P(z'_{t'})\}
\end{array}
\right\}\ge \text{PNS}
\label{ub_mediator_plus_direct_new}
\end{eqnarray}
\end{flushleft}
\label{thm3_new}
\end{theorem}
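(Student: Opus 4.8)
The plan is to recycle the argument from Theorem~\ref{thm:bound} verbatim for the first four (lower) bounds and the first four (upper) bounds, since those do not involve the mediator $Z$ at all: the chain of equivalences each time used only the individual-level identity of Theorem~\ref{theo:individual}, the decomposition $P(y)=P(y_t)P(t)+P(y_{t'})P(t')$ of Eq.~\eqref{eq:obs_outcome}, and the trivial facts $P(t)\le 1$, $P(t')\le 1$, $\text{PNS}\le P(y_t)$, $\text{PNS}\le P(y'_{t'})$. None of this changes in the presence of a partial mediator, so those eight inequalities transfer with no extra work. The genuinely new content is the fifth term in the minimum of Eq.~\eqref{ub_mediator_plus_direct_new}, namely
\[
\text{PNS}\le \sum_{z}\sum_{z'}\min\{P(y|z,t),\,P(y'|z',t')\}\,\min\{P(z_t),\,P(z'_{t'})\}.
\]

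To obtain this I would first write $\text{PNS}=P(y_t,y'_{t'};u)$ and condition on the two counterfactual values of the mediator, $Z^d(t)$ and $Z^d(t')$. Here the key structural input is that $Z$ is a \emph{partial mediator}: the counterfactual outcome $Y^d(t)$ depends on the treatment arm $t$ only through the realized mediator value $Z^d(t)$ and a direct path, so conditioning on $Z^d(t)=z$ we may replace $P(y_t\mid Z^d(t)=z;u)$ by $P(y\mid z,t;u)$, and likewise for $t'$. Summing over $z,z'$ gives
\[
\text{PNS}=\sum_{z}\sum_{z'}P\bigl(y\mid z,t;u\bigr)\,P\bigl(y'\mid z',t';u\bigr)\,P\bigl(Z^d(t)=z,\,Z^d(t')=z';u\bigr),
\]
where the two outcome factors split off because, given the mediator values, the residual exogenous noise entering $Y$ is shared across worlds only through $U$ and the realized mediators — this is the step that needs the partial-mediator assumption stated in the theorem and is where I expect to have to be most careful about exactly what cross-world independence is being invoked (for a plain SCM one would instead need the noise of $Y$ to be functionally determined by $z$ and $t$; in DiscoSCM one leans on distribution-consistency plus the mediator carrying all the arm-dependence). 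Then I bound each outcome factor by $P(y\mid z,t;u)\,P(y'\mid z',t';u)\le \min\{P(y\mid z,t;u),P(y'\mid z',t';u)\}$ and each joint-mediator factor by $P(Z^d(t)=z,Z^d(t')=z';u)\le \min\{P(z_t;u),P(z'_{t'};u)\}$ (a marginal upper bound on a joint probability), and finally drop the conditioning on $u$ via Theorem~\ref{theo:individual} to land on the population-usable form.

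The main obstacle, then, is not algebra but getting the mediator decomposition rigorously right: I must spell out precisely which counterfactual-noise independence the ``partial mediator'' hypothesis licenses so that the outcome probabilities factor as a product given the mediator values, and make sure the two crude bounds ($\min$ on the outcome product, $\min$ on the joint mediator) are both valid simultaneously — i.e.\ that I have not double-counted dependence that was already used. Once that decomposition is justified, assembling it with the eight inherited bounds into the stated $\max$/$\min$ pair is immediate, and the proof of Theorem~\ref{thm3_new} is complete.
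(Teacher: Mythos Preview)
Your plan for the eight inherited bounds is exactly what the paper does: it invokes Theorem~\ref{thm:bound} and moves on. For the fifth upper bound, however, you are aiming for something stronger than what is needed, and that stronger thing is not licensed by the hypotheses.

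You try to establish the \emph{equality}
\[
\text{PNS}=\sum_{z,z'}P(y\mid z,t;u)\,P(y'\mid z',t';u)\,P(z_t,z'_{t'};u),
\]
i.e.\ you want the outcome probabilities to \emph{factor} once the counterfactual mediator values are fixed. That factorization is a cross-world independence statement about the $Y$-noise (essentially ICN restricted to the outcome equation). The theorem is stated for a general DiscoSCM with a partial mediator, not a DiscoSCM with ICN; indeed this section of the paper is explicitly about bounds that survive \emph{without} ICN. A partial mediator only tells you that conditioning on the counterfactual $Z^d(t)=z$ is equivalent to intervening on $(T,Z)=(t,z)$; it does not decouple the residual $Y$-noises across worlds. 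So the step you correctly flag as delicate is not merely delicate---in general it fails.

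The paper sidesteps this completely. After the same marginalization over $(z_t,z'_{t'})$ it keeps the \emph{joint} and uses the mediator structure only to rewrite
\[
P(y_t,y'_{t'}\mid z_t,z'_{t'})=P(y_{t,z},\,y'_{t',z'}),
\]
then applies the trivial bound
\[
P(y_{t,z},y'_{t',z'})\le\min\{P(y_{z,t}),P(y'_{z',t'})\}=\min\{P(y\mid z,t),P(y'\mid z',t')\},
\]
which is just ``a joint probability is at most each marginal'' followed by Theorem~\ref{theo:individual}. The mediator factor is bounded exactly as you propose, $P(z_t,z'_{t'})\le\min\{P(z_t),P(z'_{t'})\}$, by the same trivial inequality. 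No cross-world independence of any noise is ever invoked. Replace your product-equality step by this joint-$\le$-min step and the argument goes through cleanly; the worry you raise in your last paragraph then evaporates.
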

\begin{proof}
Given the previously established Theorem \ref{thm:bound}, it suffices to prove the following equation:

\begin{align*}
&P(y_t, y'_{t'})  \leq \sum_z \sum_{z'} \min\{P(y|z,t), P(y'|z',t')\}\times \min\{P(z_t),P(z'_{t'})\} \\ 
&\Leftrightarrow \sum_z \sum_{z'} P(y_t,y'_{t'},z_t,z'_{t'}) \\ 
&\leq \sum_z \sum_{z'} \min\{P(y|z,t), P(y'|z',t')\}\times \min\{P(z_t),P(z'_{t'})\} \\ 
&\Leftrightarrow \sum_z \sum_{z'} P(y_t,y'_{t'}|z_t,z'_{t'}) P(z_t,z'_{t'}) \\ 
&\leq \sum_z \sum_{z'} \min\{P(y|z,t), P(y'|z',t')\}\times \min\{P(z_t),P(z'_{t'})\} \\ 
&\Leftrightarrow \sum_z \sum_{z'} P(y_{t, z},y'_{t', z'}) P(z_t,z'_{t'})  \\
&\leq \sum_z \sum_{z'} \min\{P(y_{z,t}), P(y'_{z',t'})\}\times \min\{P(z_t),P(z'_{t'})\} 
\end{align*}

It thus left to prove:

\begin{enumerate}
    \item $P(y_{x, z},y'_{t', z'}) \leq \min\{P(y_{z,t}), P(y'_{z',t'})\}$, and
    \item $P(z_t,z'_{t'}) \leq \min\{P(z_t),P(z'_{t'})\}$.
\end{enumerate}

Both are evidently true by probability formula.
\end{proof}
 
In the context of the unit selection problem, individual responses are delineated into four distinct categories: compliers, always-takers, never-takers, and defiers. The associated benefits of selecting an individual from each category are represented by $\beta$, $\gamma$, $\theta$, and $\delta$, respectively. This delineation forms the basis of a benefit function for any given individual $u$, expressed as:
\begin{align}
f_u = \beta P(y_t, y'{t'}) + \gamma P(y_t, y{t'}) + \theta P(y't, y'{t'}) + \delta P(y'{t}, y{t'}),
\end{align}
where $f_u$ quantifies the expected benefit of making a counterfactual-informed decision for individual $u$. A lemma related to this benefit function states:
\begin{lemma}
Given a DiscoSCM,  for any individual $u$, the benefit function:
\begin{align}
    f_u = W_u + \sigma \text{PNS}
\end{align}
where $\text{PNS} = P(y_t, y'_{t'})$, and
$$W_u = (\gamma-\delta) P(y_t) + \delta P(y_{t'}) + \theta P(y'_{t'})$$ 
is identifiable or can be learned from data.
\end{lemma}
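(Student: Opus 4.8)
The plan is to eliminate three of the four response‑type probabilities from $f_u$ in favour of $\text{PNS}=P(y_t,y'_{t'})$ together with the Layer~2 marginals, and then simply read off $\sigma$ and $W_u$. First I would note that for a fixed individual $u$ the pair $(Y^d(t),Y^d(t'))$ ranges over $\{y,y'\}\times\{y,y'\}$, so the four quantities $P(y_t,y'_{t'})$ (complier), $P(y_t,y_{t'})$ (always‑taker), $P(y'_t,y'_{t'})$ (never‑taker), $P(y'_t,y_{t'})$ (defier) form a probability distribution; moreover, summing the Layer~3 valuation of Def.~\ref{def:semantics} over one free outcome recovers the corresponding Layer~2 valuation, e.g.\ $P(y_t,y_{t'})+P(y_t,y'_{t'})=P(y_t)$ and $P(y_t,y_{t'})+P(y'_t,y_{t'})=P(y_{t'})$ and $P(y_t,y'_{t'})+P(y'_t,y'_{t'})=P(y'_{t'})$. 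These marginalisation identities give
\begin{align*}
P(y_t,y_{t'}) &= P(y_t)-\text{PNS},\\
P(y'_t,y'_{t'}) &= P(y'_{t'})-\text{PNS},\\
P(y'_t,y_{t'}) &= P(y_{t'})-P(y_t)+\text{PNS}.
\end{align*}

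Next I would substitute these three expressions into $f_u=\beta P(y_t,y'_{t'})+\gamma P(y_t,y_{t'})+\theta P(y'_t,y'_{t'})+\delta P(y'_t,y_{t'})$ and collect the coefficient of $\text{PNS}$. A short linear computation yields coefficient $\sigma=\beta-\gamma-\theta+\delta$ and constant term $(\gamma-\delta)P(y_t)+\delta P(y_{t'})+\theta P(y'_{t'})$, which is precisely $W_u$; hence $f_u=W_u+\sigma\,\text{PNS}$.

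Finally, for the identifiability claim I would invoke Theorem~\ref{theo:individual}: each summand of $W_u$ is an individual‑level Layer~2 valuation, equal to the corresponding interventional/conditional probability $P(y\mid t;u)$ or $P(y'\mid t';u)$. Such quantities are recoverable from RCT data — directly at the (sub‑)population level via Theorem~\ref{theo:population}, and at the individual level under the homogeneity assumption already used in the preceding section — or can be estimated from data with machine‑learning methods using pre‑treatment features. The only genuinely delicate point is the marginalisation identity relating the Layer~3 joint to the Layer~2 marginal; this follows by summing Eq.~\eqref{eq:def:l3-semantics_new} over the free outcome variable and collapsing the inner sum through $\sum_{\*e_{\*w}}P(\*e_{\*x},\*e_{\*w})=P(\*e_{\*x})$, and it is the step I would write out carefully, since everything after it is routine bookkeeping.
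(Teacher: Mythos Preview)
Your proposal is correct and follows essentially the same route as the paper: both arguments rest on the marginalisation identities $P(y_t,y_{t'})+P(y_t,y'_{t'})=P(y_t)$, etc., and then perform the same linear bookkeeping to isolate the coefficient $\sigma=\beta+\delta-\gamma-\theta$ of $\text{PNS}$ and the residual $W_u$. The only cosmetic difference is that the paper computes $W_u-f_u$ and simplifies, whereas you substitute the solved joint probabilities directly into $f_u$; your explicit treatment of the Layer~3 $\to$ Layer~2 marginalisation step is in fact more careful than the paper's one-line ``by properties of probability''.
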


\begin{proof}
By properties of probability:
\begin{align*}
    &W_u - f_u(c) \\
    &=  W_u - (\beta P(y_t, y'_{t'}) + \gamma  P(y_t, y_{t'}) + \theta  P(y'_t, y'_{t'}) + \delta  P(y'_{t}, y_{t'})) \\
    &= \gamma  P(y_t, y'_{t'}) + \theta  P(y_t, y'_{t'}) - \delta ( P(y_t) - P(y_{t'}) + P(y'_{t}, y_{t'}))  -  \beta P(y_t, y'_{t'}) \\
    &= \gamma  P(y_t, y'_{t'}) + \theta  P(y_t, y'_{t'}) - \delta  P(y_t, y'_{t'}) -  \beta P(y_t, y'_{t'}) \\
    &= P(y_t, y'_{t'}) (\gamma + \theta - \delta - \beta) \\
    &= - \sigma \text{PNS}
\end{align*}  
\end{proof}
Based on this Lemma, and integrating the preceding theorems concerning the identifiable or learnable bounds for the PNS, bounds for the benefit function at the individual level can thus be obtained.

\section{Conclusion}

Answering counterfactual questions with data, termed as Layer 3 valuation, poses a significant challenge. Traditional causal modeling frameworks exhibit inherent weaknesses in modeling counterfactuals due to the degeneration problem caused by the consistency rule. However, the DiscoSCM framework, grounded in the distribution-consistency rule, has addressed these theoretical challenges by introducing appropriate assumptions.

To summarize, when the correlation between counterfactual noises is 1, as in standard SCMs, structural equations is necessary to be learned and solved for computing counterfactuals. When counterfactual noises exhibit some level of correlation, neither RCT nor observational data can guarantee the recovery of related counterfactual parameters due to the indeterminable counterfactuals issue, though bounds for these parameters can still be obtained. Lastly, when counterfactual noises are independent, Layer 3 valuations can be simplified to Layer 2, allowing for their typical identification or learning from data. In conclusion, we advocate that DiscoSCM represents a significant advancement in counterfactual inference.

\subsection{Further Discussions}

Heterogeneous causal effects have been extensively studied, yet research into areas such as heterogeneous causal graphs and heterogeneous counterfactual probabilities remains relatively scarce. We posit that adopting the DiscoSCMs framework, with its enhanced causal modeling capacity, might significantly advance research in these related areas. The DiscoSCMs framework, with its unique perspective on understanding of causality, offers new avenues for counterfactual inference that embracing heterogeneity.

\paragraph{What are counterfactuals?} 
The term  ``counterfactuals'' is marked by ambiguity, encompassing a broad spectrum of interpretations within the domain of causal inference. It's crucial to delineate and explore its meaning for comprehensive understanding. Counterfactuals can refer to events, information, queries, and variables, which are broadly used in areas like counterfactual machine learning or reasoning, covering various techniques, basically all analyses related to causality, representing a widespread academic practice. Another prevailing interpretation associates counterfactuals with quantities at Layer 3 of the PCM, formalized as Layer valuations, notable for involving at least two different worlds. Aligning with this dual interpretation, we propose an understanding that views counterfactuals as pertaining to anything about the counterfactual world, and perceiving counterfactuals as layer 3 valuations within the PCH context.

Viewing from a broader lens, statistics is about the reduction of data (R.A. Fisher, 1922), and machine learning involves learning hypotheses from data through Empirical Risk Minimization (ERM), both operating within the factual world. In contrast, causal inference distinguishes itself by operating in the counterfactual or imaginary world, with its core focused on counterfactual inference. This distinction highlights the unique domain of causal inference, setting it apart from the data-driven approaches of statistics and machine learning through its focus on exploring the "what-ifs" that lie beyond observed reality.

\paragraph{Embracing Heterogeneity with DiscoSCMs?} 
In discussing the development of the innovative DiscoSCM framework, the PO framework and SCMs serve as two pivotal references. On one hand, inspired by the PO framework's clear semantics for representing individuals with an index, DiscoSCM advances this foundation by explicitly incorporating a unit selection variable \(U\). This variable initially acts as an index, linking data points to specific individuals, with each instantiation \(U = u\) functioning similarly to pre-treatment features in the PO framework, as revealed in Section 5.3 in \cite{gong2024discoscm}. On the other hand, inspired by SCMs, DiscoSCM positions underlying causal mechanisms as primitives, deriving a tri-level hierarchy for causal information, and formalizing them as Layer valuations. Importantly, the unit selection variable \(U\) plays a central role in Layer valuations, including those at the population level (as described in Theorem \ref{theo:population}) and the individual level (as outlined in Theorem \ref{theo:individual}).

To delve into the core and intrinsic logic of DiscoSCM, four rationales are emphasized. First, the novel perspective of examining causality (population versus individual) manifests \textit{individual causality}. Second, invariant causal mechanisms across individuals ensure homogeneity in causal parameters. Third, the presence of heterogeneous individuals necessitates the use of causal embeddings for each individual in Layer valuations. Fourth, causality operates sophisticatedly on both homogeneity and heterogeneity among individuals. In summary, these rationales collectively affirm the profound insight that ``causality is invariance across heterogeneous units,'' succinctly encapsulated by the term \textit{individual causality}.

Thus, DiscoSCMs distinguish themselves from traditional causal modeling frameworks by their enhanced capacity to embrace heterogeneous counterfactuals. This capability provides new insights into traditional challenges. For instance, the interpretation of the Surrogate Paradox, as discussed in \cite{gong2024discoscm}, fundamentally pertains to a heterogeneous causal graph across individuals. Traditionally, this paradox has often been approached through the lens of unobserved confounding. However, DiscoSCM offers a more intuitive and fundamental explanation, illustrating the framework's unique ability to navigate the complexities of heterogeneity in counterfactual inference.

\bibliographystyle{ACM-Reference-Format}
\bibliography{reference}

\appendix

\section{Preliminaries on Counterfactual Inference}
\label{sec:prelim}

\subsection{Causal Frameworks Based on Consistency}

The principal frameworks in causal modeling, Potential Outcomes (PO) \cite{neyman1923application, imbens2015causal} and Structural Causal Models (SCM) \cite{pearl2009causality}, hold theoretical equivalence and are both anchored in the consistency rule \cite{angrist1996identification, cole2009consistency, pearl2011consistency}. The PO approach centers on experimental units, emphasizing individual semantics, whereas SCM is based on structural equations, from which it delineates three levels of causal information: associational, interventional, and counterfactual.

The PO framework, also known as the Rubin Causal Model\cite{holland1986statistics}, begins with a population of experiment units. There is a treatment that can take on different values for each unit. Each unit in the population is characterized by a set of potential outcomes $Y(t)$, one for each value of the treatment. Only one of these potential outcomes can be observed, namely the one corresponding to the treatment received:
\begin{equation}
\label{eq:y_obs}
Y=\sum_{t} Y(t){\bf 1}_{T=t}.
\end{equation}
This equation is a derivation of the consistency assumption. 
\begin{asmp}[\textbf{Consistency}] The potential outcome $Y(t)$ precisely matches the observed variable $Y$ given observed treatment $T=t$, i.e.,
\begin{align}
\label{assump:consist}
    T=t \Rightarrow Y(t) = Y.
\end{align}
\end{asmp}

The framework of \textit{structural causal models} (SCMs) is presented as follows.  

\begin{definition}[\textbf{Structural Causal Models} \citep{pearl2009causality}]
    \label{def:scm} 
    A structural causal model is a tuple $\langle \mathbf{U}, \mathbf{V}, \mathcal{F}\rangle$, where 
    \begin{itemize}
        \item $\mathbf{U}$ is a set of background variables, also called exogenous variables, that are determined by factors outside the model, and $P(\cdot)$ is a probability function defined over the domain of $\mathbf{U}$; 
        \item $\mathbf{V}$ is a set $\{V_1, V_2, \ldots, V_n\}$ of (endogenous) variables of interest that are determined by other variables in the model -- that is, in $\mathbf{U} \cup \mathbf{V}$;
        \item $\mathcal{F}$ is a set of functions $\{f_1, f_2, \ldots, f_n\}$ such that each $f_i$ is a mapping from (the respective domains of) $U_{i} \cup Pa_{i}$ to $V_{i}$, where $U_{i} \subseteq \*{U}$, $Pa_{i} \subseteq \mathbf{V} \setminus V_{i}$, and the entire set $\mathcal{F}$ forms a mapping from $\mathbf{U}$ to $\mathbf{V}$. That is, for $i=1,\ldots,n$, each $f_i \in \mathcal{F}$ is such that 
        $$v_i \leftarrow f_{i}(pa_{i}, u_{i}),$$ 
        i.e., it assigns a value to $V_i$ that depends on (the values of) a select set of variables in $\*U \cup \*V$.
    \end{itemize}
\end{definition}
Interventions are defined through a mathematical operator.
\begin{definition}[\textbf{Submodel-``Interventional SCM''} \citep{pearl2009causality}]
    Consider an SCM $\langle \mathbf{U}, \mathbf{V}, \mathcal{F}\rangle$, with a set of variables $\*X$  in $\*V$, and a particular realization $\*x$ of $\*X$. The $do(\*x)$ operator, representing an intervention (or action), modifies the set of structural equations $\mathcal{F}$ to $\mathcal{F}_{\*x} := \{f_{V_i} : V_i \in \*V \setminus \*X\} \cup \{f_X \leftarrow x : X \in \*X\}$ while maintaining all other elements constant. Consequently, the induced tuple $\langle \mathbf{U}, \mathbf{V}, \mathcal{F}_{\*x}\rangle$ is called as \textit{Intervential SCM} , and potential outcome $\*Y(\*x)$ (or denoted as $\*Y_{\*x}(\*u)$) is defined as the set of variables $\*Y \subseteq \*V$ in this submodel.
\end{definition}
Formally, an SCM gives valuation for associational, interventional and counterfactual quantities in the Pearl Causal Hierarchy (PCH) as follows.

\begin{definition}[\textbf{Layer Valuation} \citep{bareinboim2022pearl}]
\label{def:l3-semantics}
An SCM $\langle \*U, \*V, \mathcal{F}\rangle$ induces a family of joint distributions over potential outcomes $\*Y(\*x), \ldots, \*Z({\*w})$, for any $\*Y$, $\*Z$, $\dots$, $\*X,$ $\*W \subseteq \*V$: 
\begingroup\abovedisplayskip=0.5em\belowdisplayskip=0pt
\begin{align}\label{eq:def:l3-semantics}
    P(\*{y}_{\*{x}},\dots,\*{z}_{\*{w}}) =
\sum_{\substack{\{\*u\;\mid\;\*{Y}({\*x})=\*{y}, \dots,\; \*{Z}({\*w})=\*z\}}}
    P(\*u).
\end{align}
\endgroup
is referred to as Layer 3 valuation. In the specific case involving only one intervention, e.g., $do(\*x)$:
\begingroup\abovedisplayskip=0.5em\belowdisplayskip=0pt
\begin{align}
    \label{eq:def:l2-semantics}
    P({\*y}_{\*x}) = \sum_{\{\*u \mid {\*Y}({\*x})={\*y}\}}
    P(\*u),
\end{align}
\endgroup
is referred to as Layer 2 valuation. The even more specialized case when $\*X$ is empty:
\begingroup\abovedisplayskip=0.5em\belowdisplayskip=0pt
\begin{align}
    \label{eq:def:l1-semantics}
    P({\*y}) = 
    \sum_{\{\*u \mid {\*Y}={\*y}\}}
    P(\*u).
\end{align}
\endgroup
is referred to as Layer 1 valuation. Here, $\*y$ and $\*z$ represent the observed outcomes, $\*x$ and $\*w$ the observed treatments, $\*u$ the noise instantiation, and we denote $\*y_{\*x}$ and $\*z_{\*w}$ as the realization of their corresponding potential outcomes. 
\end{definition}

In the case of recursive SCMs, the $do$-calculus can be employed to completely identify all Layer 2 expressions \cite{pearl1995causal, huang2012pearl}. However, calculating counterfactuals at Layer 3 is generally far more challenging compared to Layers 1 and 2. This is because it essentially requires modeling the joint distribution of potential outcomes, such as the potential outcomes with and without aspirin. Unfortunately, we often lack access to the underlying causal mechanisms and only have observed traces of them. This limitation leads to the practical use of Eq. \eqref{eq:def:l3-semantics} for computing counterfactuals being quite restricted.

\subsection{Probability of Causation}
\label{sec:poc}

The probability of causation and its related parameters can be addressed by counterfactual logical \cite{pearl1999probabilities, pearl2009causality}, three prominent concepts of which are formulated in the following  :
\begin{definition}[Probability of necessity (PN)]
Let $X$ and $Y$ be two binary variables in a causal model $M$, let $x$ and $y$ stand for the propositions $X=true$ and $Y=true$, respectively, and $x'$ and $y'$ for their complements. The probability of necessity is defined as the expression:
\begin{eqnarray}
\text{PN} & \delequal & P(Y_{x'}=false|X=true,Y=true)\nonumber \\ 
& \delequal & P(y'_{x'}|x,y)
\label{def:pn}
\end{eqnarray}
\end{definition}
\par
In other words, PN stands for the probability that event $y$ would not have occurred in the absence of event $x$, given that $x$ and $y$ did in fact occur. This counterfactual notion is used frequently in lawsuits, where legal responsibility is at the center of contention.
\begin{definition}[Probability of sufficiency (PS) ] 
\begin{eqnarray}
\text{PS}\delequal P(y_x|y',x')
\label{def:ps}
\end{eqnarray}
\end{definition}
\begin{definition}[Probability of necessity and sufficiency (PNS)]
\begin{eqnarray}
\text{PNS}\delequal P(y_x,y'_{x'})
\label{def:pns}
\end{eqnarray}
\end{definition}
PNS stands for the probability that $y$ would respond to $x$ both ways, and therefore measures both the sufficiency and necessity of $x$ to produce $y$. Tian and Pearl \cite{tian2000} provide tight bounds for PNS, PN, and PS without a causal diagram:
\begin{eqnarray}
\max \left \{
\begin{array}{cc}
0 \\
P(y_x) - P(y_{x'}) \\
P(y) - P(y_{x'}) \\
P(y_x) - P(y)\\
\end{array}
\right \}
\le \text{PNS}
\label{pnslb_old}
\end{eqnarray}
\begin{eqnarray}
\text{PNS} \le \min \left \{
\begin{array}{cc}
 P(y_x) \\
 P(y'_{x'}) \\
P(x,y) + P(x',y') \\
P(y_x) - P(y_{x'}) +\\
+ P(x, y') + P(x', y)
\end{array} 
\right \}
\label{pnsub_old}
\end{eqnarray}
\begin{eqnarray}
\max \left \{
\begin{array}{cc}
0 \\
\frac{P(y)-P(y_{x'})}{P(x,y)}
\end{array} 
\right \}
\le \text{PN}
\label{pnlb_old}
\end{eqnarray}
\begin{eqnarray}
\text{PN} \le
\min \left \{
\begin{array}{cc}
1 \\
\frac{P(y'_{x'})-P(x',y')}{P(x,y)} 
\end{array}
\right \}
\label{pnub_old}
\end{eqnarray}
In fact, when further structural information is available, we can obtain even tighter bounds for those parameters, as highlighted by recent research \cite{li2022unit}.

\end{CJK}
\end{document}